\newtheorem{theorem}{Theorem}
\newtheorem{lemma}{Lemma}[section]
\theoremstyle{definition}
\theoremstyle{remark}
\newcommand{\T}{\mathbf{T}}
\newcommand{\bsigma}{\boldsymbol{\sigma}}
\newlist{thmcases}{enumerate}{1}
\setlist[thmcases]{
  label=\textbf{\upshape Case \arabic*:},
  leftmargin=*,
  ref={\thetheorem.\arabic*}}
\newlist{thmsteps}{enumerate}{1}
\setlist[thmsteps]{
  label=\textbf{\upshape Step \arabic*:},
  leftmargin=*,
  ref={\thetheorem.\arabic*}}
\newenvironment{textframe}%
  {\vspace{4mm}\begin{mdframed}[linewidth=1pt]}%
  {\end{mdframed}}
\title{Metrical Task Systems with Online Machine Learned Advice}
\author{Kevin Rao}
\date{September 2019}
\begin{document}

\maketitle

\begin{center}
    $\mathbf{Abstract}$
\end{center}

Machine learning algorithms are designed to make accurate predictions of the future based on existing data, while online algorithms seek to bound some performance measure (typically the competitive ratio) without knowledge of the future. Lykouris and Vassilvitskii \cite{LV} demonstrated that augmenting online algorithms with a machine learned predictor can provably decrease the competitive ratio under as long as the predictor is suitably accurate.

In this work we apply this idea to the Online Metrical Task System problem, which was put forth by Borodin, Linial, and Saks \cite{BLS} as a general model for dynamic systems processing tasks in an online fashion. We focus on the specific class of uniform task systems on $n$ tasks, for which the best deterministic algorithm is $O(n)$ competitive and the best randomized algorithm is $O(\log n)$ competitive.

By giving an online algorithms access to a machine learned oracle with absolute predictive error bounded above by $\eta_0$, we construct a $\Theta(\min(\sqrt{\eta_0}, \log n))$ competitive algorithm for the uniform case of the metrical task systems problem. We also give a $\Theta(\log \eta_0)$ lower bound on the competitive ratio of any randomized algorithm.

\newpage

\section{Introduction}

\subsection{Problem Statement}
In this work we consider the uniform metrical task system as an online problem, where online algorithms are equipped with a machine learning algorithm (or some other similar construct) that provides some advice which is meant to be helpful but may not be perfectly accurate.  

The \textit{metrical task system}, introduced by Borodin, Linial, and Saks in 1987 \cite{BLS}, models a number of online problems, including paging and k-servers. The goal of an online algorithm for task systems is, given a sequence of tasks on a system of $n$ processing states, to generate a schedule that completes the tasks with minimal cost. In their seminal paper, they prove that in the general case, no deterministic algorithm is better than $2n - 1$ competitive. In the special case of uniform task systems, where the cost of transitioning between any two states is fixed at 1, they prove an $\Omega(\log n)$ lower bound on the competitive ratio of any randomized algorithm, and provide an algorithm that is $H_n$-competitive, where $H_n$ is the $n$th harmonic number. 

In this paper, we present a deterministic algorithm for the uniform task systems problem with access to a machine learned oracle that has competitive ratio $\Theta(\sqrt{\eta_0})$, where $\eta_0$ is an upper bound on the predictive error of the machine learning oracle. We also prove that this algorithm is an optimal deterministic algorithm, and demonstrate how this algorithm can be mixed with a randomized strategy for a robust competitive ratio of $\Theta(\min(\sqrt{\eta_0}, \log n))$. Finally, we give a lower bound on the competitive ratio of randomized algorithms with machine learned advice for uniform task systems.

\subsection{Definitions}

We use the standard definition of online algorithms and their worst case analysis by modeling the problem as a game between a player and an adversary. We recommend referring to \cite{BLS} for specific definitions and \cite{Ravi} for examples. We refer to the optimal offline and clairvoyant algorithm as OPT.

A \textit{metrical task system (MTS)} is defined as a set of states $S$ and an associated transition cost $d(s_i, s_j)$ between any two states $s_i, s_j \in S$. Our work is only concerned with the specific case of \textit{uniform} task systems, which satisfy $d(s_i, s_j) = 1$ for all $i \neq j$ and $d(s_i, s_i) = 0$. We define $n \triangleq |S|$ and write a metrical task system as the ordered pair $(S, d)$. A \textit{task} $T$ is a length $n$ array where $T[i]$ is the cost of processing the task in state $s_i$. The input to an online scheduling algorithm is a sequence of tasks $\T$, one per time step, and we refer to the $i^{th}$ task of our sequence with the notation $\T^i$. In the worst case setting, an adversary chooses the number of tasks, the processing costs of all tasks, and the order that they are requested in the input.

A scheduling algorithm for a task system produces a \textit{schedule} in the form of a function $\bsigma: \mathbb{N} \rightarrow S$, where at time $t$ the algorithm is in state $\bsigma(t)$. By default, we say $\bsigma(0) = s_1$. The cost of a scheduling algorithm $A$ on an input $\T$ is the sum of state transition costs and task processing costs, or

$$c_A(\T) = \sum_{i = 1}^{|\T|} d(\bsigma_{A}(i - 1), \bsigma_{A}(i)) + \T^i[\bsigma_{A}(i)]$$

And in an \textit{online} scheduling algorithm, the state $\bsigma(i)$ is chosen based only on $\T^1, \T^2, \dots \T^i$ and $\bsigma(0), \bsigma(1), \dots \bsigma(i - 1)$.

We also use the definitions of the \textit{phase} of an algorithm for uniform task systems and \textit{saturation} of a state from BLS \cite{BLS}, which are as follows: Suppose phase $i$ of our algorithm begins at time $t_i$. At the beginning of a phase, all states are unsaturated. For each state $s$, call algorithm $A_s$ the algorithm that remains in state $s$ and never transitions to a different state. At time $t_i \leq t < t_{i + 1}$, we say a state $s$ is saturated at time $t$ if the cost that $A_s$ incurs on the tasks in the interval $[t_i, t]$ is greater than or equal to $1$. When all states are saturated, the next phase begins and all states are again unsaturated. We define a \textit{BLS phase algorithm} to be any algorithm that runs in phases, never processes tasks in a saturated state, and only leaves the current state when it becomes saturated. The intuition behind the last condition is that if we leave a state before it becomes saturated, we will need to revisit it before the end of the phase anyways, which is easily exploitable by an adversary. Notably, the only difference between different BLS phase algorithms is how they choose which unsaturated state to transition to when the current state becomes saturated.

We define a \textit{machine learned oracle} to be a black box that makes predictions about the future. We make no assumptions about the nature of the underlying machine learning model or the distribution of the error of the predictions, but we are given $\eta_0$, a worst case guarantee regarding the prediction error $\eta$ of our oracle. We call the advice from such an oracle \textit{Online Machine Learned Advice (OMLA)}, a term coined by Lykouris and Vassilvitskii in 2018\cite{LV}. In worst case analysis, an adversary can choose the oracle's predictions, subject to the condition that $\eta \leq \eta_0$. In our work we try to develop algorithms that have low competitive ratios for inputs with low prediction error, and for high error inputs don't perform worse than algorithms without an oracle.

\subsection{Previous Work}

\subsubsection{Metrical Task Systems}

In 1987, Borodin, Linial, and Saks propose the Oblivious BLS phase algorithm (shown below) for the uniform case of the metrical task system problem.

\begin{algorithm}[]
    \SetAlgoLined
    
    \While{There still remain unprocessed tasks}{
        Transition to a state uniformly at random
        
        \While{The current phase has not yet ended}{
            Process the next task request in the current state 
            
            \If{The current state is saturated}{
                \If{There are still unsaturated states}{
                    Transition to one of them uniformly at random
                }
                \Else{
                    End the current phase
                }
            }
        }
    }
 \caption{Oblivious BLS phase algorithm}
\end{algorithm}

This algorithm is described as ``oblivious" because its behavior only depends weakly on the task sequence. The analysis of the competitive ratio relies on the observation that when there are $k$ unsaturated states remaining, the probability that the algorithm is in the next state to be saturated is $\frac{1}{k}$. In particular, let $f(k)$ be the expected number of transitions the algorithm will make until the end of the phase given that there are $k$ unsaturated states. With probability $\frac{1}{k}$ the algorithm will make a state transition before there are $k - 1$ saturated states, so $f(k) = \frac{1}{k} + f(k - 1)$ and $f(n) = H_n$, the $n$th harmonic number.

\begin{lemma}
If in the worst case a BLS phase based algorithm makes $k$ state transitions per phase, it is $\Theta(k)$-competitive.
\label{lemma:trans_to_ratio}
\end{lemma}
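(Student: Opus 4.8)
The plan is to prove the two directions separately by amortizing costs across phases, using the defining property that at the end of each phase every state is saturated, so that staying in any single state for an entire phase costs at least $1$.

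For the upper bound (that the algorithm is $O(k)$-competitive), I would bound the algorithm's cost and OPT's cost per phase. First, in a single phase the algorithm makes at most $k$ transitions, contributing transition cost at most $k$; and since a BLS phase algorithm only processes tasks in the state it currently occupies and leaves that state the moment it saturates, the processing cost it accumulates in each occupied state is at most the saturation threshold (a constant, assuming individual task costs are bounded), and it occupies at most $k+1$ states per phase. Hence the algorithm pays $O(k)$ per phase. Second, I would show OPT pays $\Omega(1)$ per phase: within a phase either OPT makes a transition, costing at least $1$, or OPT remains in a single state $s$ for the entire phase, in which case its processing cost equals the total cost of $s$ over the phase, which is at least $1$ because $s$ is saturated. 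Summing over the $P$ phases gives a total algorithm cost of at most $(k + O(1))P$ and an OPT cost of at least $P - O(1)$, so the ratio is $O(k)$ up to the vanishing boundary term from the final, possibly incomplete, phase.

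For the lower bound (that the ratio is $\Omega(k)$), I would exhibit an input on which the ratio of the algorithm's cost to OPT's cost is $\Omega(k)$. Using the hypothesis that the algorithm makes $k$ transitions per phase in the worst case, I would take such a worst-case phase and repeat it $P$ times. On this input the algorithm pays at least $k$ per phase in transition cost alone, so it suffices to argue that OPT can be kept at $O(1)$ per phase. The idea is to have the adversary concentrate cost on whichever state the algorithm currently occupies, forcing it to saturate and move and thereby realizing all $k$ transitions, while leaving one designated ``safe'' state essentially untouched until it is saturated last by a single unit of cost at the end of the phase. OPT then remains in the safe state for the whole phase, paying $O(1)$, which yields a per-phase and hence overall ratio of $\Omega(k)$.

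The main obstacle is this lower bound: guaranteeing that a cheap state for OPT persists while the algorithm is simultaneously driven through all $k$ transitions. For a deterministic BLS phase algorithm this follows from the adaptive ``chasing'' construction above, since the adversary can predict the algorithm's next state; for a randomized algorithm such as the oblivious one the adversary is oblivious, and I would instead combine the per-phase $\Omega(1)$ bound on OPT with a lower bound on the expected transition count (matching the $\Omega(\log n)$ bound of \cite{BLS} when $k = H_n$). Secondary care is needed for boundary effects — the overshoot of the single task that saturates a state, and the accounting of the first and last phases — but these contribute only additive $O(1)$ terms that do not affect the asymptotic ratio.
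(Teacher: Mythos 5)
Your upper bound (the $O(k)$ direction) is essentially the paper's argument: the algorithm pays at most $k$ in transition costs plus at most one unit of processing cost in each of the at most $k+1$ states it occupies per phase, while OPT pays at least $1$ per phase because it either makes a transition or sits in a single state that is, by the end of the phase, saturated. That half is correct.

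The lower bound is where there is a genuine gap. You set out to \emph{construct} an input on which OPT stays cheap, via an adaptive ``chasing'' adversary that preserves a designated safe state, and you correctly observe that this construction breaks for randomized algorithms; but your proposed fix --- combining the per-phase $\Omega(1)$ bound on OPT with a lower bound on the expected transition count --- points the wrong way, since showing the ratio is $\Omega(k)$ requires an \emph{upper} bound on OPT's per-phase cost, not a lower bound. The idea you are missing is that no construction is needed at all: the hypothesis already hands you a worst-case input on which the algorithm makes $k$ transitions (hence pays at least $k$) per phase, and on \emph{every} input OPT pays at most about $2$ per phase. This is because the last state to be saturated in a phase accumulates total processing cost of roughly at most $1$ over the entire phase --- by definition it remains unsaturated until the phase ends --- so OPT can spend at most one transition to reach that state and then sit there, for a cost of at most $2$. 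Your ``safe state'' is exactly this last-saturated state, but it exists automatically in every phase as a consequence of the saturation structure rather than having to be engineered by the adversary, which is why the paper's two-line OPT bound works uniformly for deterministic and randomized algorithms and makes the $\Omega(k)$ direction immediate.
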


\begin{proof}
Consider any BLS phase based algorithm $A$. Since our task system is uniform, $A$ incurs a cost of $1$ for each state transition, and since $A$ stays in a state only until it becomes saturated, $A$ incurs a task processing cost of at most $1$ in each state it visits during the phase. For $k$ state transitions $A$ processes tasks in at most $k + 1$ distinct states per phase (since it may not transfer out of its current state at the start of the phase). Thus, $A$ incurs a cost of at least $1$ and at most $2$ for each state that it transfers to, plus a possible additional $1$ if it processes tasks before its first state transition, for a total cost between $k$ and $2k + 1$.
 
During each phase, OPT either does or does not start out in the last state to be saturated. In the first case OPT can simply stay put the entire phase and incur a cost of 1, and in the latter case OPT can simply transition to the last state to be saturated and then stay put, for a cost of 2. Thus, OPT incurs a cost of at least $1$ and at most $2$ per phase, and $A$ is $\Theta(k)$-competitive.
\end{proof}

By lemma \ref{lemma:trans_to_ratio}, the competitive ratio of this oblivious algorithm is $\Theta(H_n)$, which is known to be $\Theta(\log n)$. Notably, lemma \ref{lemma:trans_to_ratio} equates an algorithm's competitive ratio and number of state transitions up to a constant factor of $2$, so for the rest of this work we simply concern ourselves with the latter.

Other algorithms that improved on BLS's work appeared in later literature. In 1990, Manasse, McGeoch, and Sleator demonstrate that their work on server problems can be applied to \textit{forced task systems} to give a $n - 1$ competitive algorithm for this restricted class of task systems \cite{MMS}. In 1998, Irani and Seiden prove that by using a different definition of phase and saturation, we can achieve a competitive ratio of $H_n + O(\sqrt{\log n})$ for uniform task systems. Since $H_n$ is the dominating term, this is only a marginal improvement BLS's algorithm. However, the framework they used is not as nice to work with; in particular, they transform the problem into a continuous time scheduling problem, and the running of their algorithm can't be partitioned into nicely independent phases.

The intuition behind why BLS's phase and saturation model performs well comes from the observation that an algorithm should avoid staying in a state where processing tasks is expensive, and favor states where processing tasks is cheap. Thus, the states that become saturated later tend to process the same tasks at a lower cost, and as a phase continues we tend to visit them. Furthermore, the phase and saturation model is particularly nice because it obviates any consideration of task processing costs, which is covered in the definition of saturation.

\subsubsection{Machine Learned Advice}

In recent years, the machine learned advice model has been studied alongside multiple online problems. Purohit, Svitkina, and Kumar worked on the non-clairvoyant job scheduling problem and introduced an oracle that predicts the processing time of each job \cite{Ravi}. They introduce an online algorithm with advice they call the Shortest Predicted Job First (SPJF) algorithm which performs well with low prediction error, then show that the SPJF algorithm can be combined with the well studied Round Robin (RR) algorithm for a Preferential Round Robin algorithm, which performs much better than RR with low prediction error while remaining no worse than RR with high prediction error. Lykouris and Vassilvitskii work on the online cacheing problem, which is known to be $\Omega(k)$ competitive in the deterministic case and has a $\Theta(\log k)$ competitive randomized algorithm \cite{LV}. By predicting the next time a page is requested, their oracle based modification of the classic Marker algorithm is $2 \cdot \min(1 + O(\sqrt{\eta / OPT}), 2H_k)$ competitive. Mitzenmacher takes a different approach to machine learned advice by describing a Sandwiched Learned Bloom Filter, which modifies the original data structure by including a learned function that attempts to predict the membership of any query key \cite{Mitz}. His construction allows the user to achieve much lower false positive rates as a trade off with the size of the learned function.

\section{Results}

We first discuss a model for machine learned advice that seems reasonable at a glance and has been used for related problems, but falls short for uniform task systems. This example illustrates some potential pitfalls of designing algorithms with online machine learned advice, and motivates the construction that yields the main results of this paper.

\subsection{A Motivating Discussion}

Following the example set by Lykouris and Vassilvitskii in their work on cacheing, define the LV oracle (after the authors) to be such that every time a specific task $T_i$ is requested, LV outputs $h(T_i)$, the predicted next time that task $T_i$ will be requested. We now prove by construction that against an adversary, any deterministic BLS phase algorithm with access to the LV oracle makes $n$ state transitions per phase and has linear competitive ratio:

\begin{theorem} Given any deterministic BLS phase algorithm $A$ with access to the LV oracle, an adversary can force $A$ to make $n - 1$ state transitions per phase. Furthermore, it can do so with $\eta = 0$ absolute predictive loss.
\label{thm:lv_bad}
\end{theorem}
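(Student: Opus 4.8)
The plan is to exhibit an adaptive adversary that plays a ``whack-a-mole'' against $A$: at every step it issues a task whose only nonzero processing cost lies in the state $A$ currently occupies, thereby saturating exactly that state and forcing $A$ to move. Because $A$ is deterministic, the adversary can always compute $A$'s current state $\bsigma_A(t)$ before committing to the next task, and then issue the task with cost $1$ in state $\bsigma_A(t)$ and cost $0$ everywhere else. Since this task adds cost only to the state $A$ is sitting in, the targeted state — which had accumulated no cost this phase — crosses the saturation threshold of $1$, while every other state's accumulated cost is left unchanged, so precisely one new state saturates at each step.

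First I would pin down the forced transition count. At the start of a phase $A$ occupies some state; the adversary saturates it, and the BLS phase rules (such an algorithm never processes in a saturated state and leaves only once its current state saturates) compel $A$ to transition to some unsaturated state, which the adversary then saturates, and so on. The decisive observation is that this argument is completely indifferent to \emph{which} unsaturated state $A$ selects: whatever $A$'s transition rule is, and in particular however it consults the LV oracle, the adversary simply saturates wherever $A$ lands next. Since the phase cannot end while any state remains unsaturated, $A$ is driven through all $n$ states, making exactly $n-1$ transitions before the final state saturates and the phase closes. By Lemma~\ref{lemma:trans_to_ratio} this already gives a $\Theta(n)$ competitive ratio, establishing the linear bound.

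The remaining task, and the only point requiring genuine care, is to certify $\eta = 0$. The apparent difficulty is circular: the oracle's output at time $t$ is a claim about request times strictly after $t$, and $A$ may use that output to choose its transition, which in turn shapes the very future sequence the oracle is predicting. I would dissolve this circularity by using a fresh, distinct task for each saturation, so that every task is requested exactly once over the entire input; then each task's true next request time is never realized (equivalently $+\infty$), the oracle reporting this value errs on no prediction, and $\eta = 0$ holds identically, with no dependence on $A$'s choices. This exposes exactly why the LV model falls short: the hard uniform-MTS instance depends on a carefully ordered single pass of saturations rather than on task recurrence, so the recurrence times the LV oracle predicts are irrelevant to the decisions an MTS algorithm must make. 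The main obstacle is therefore conceptual rather than computational — recognizing that the forcing is robust to any oracle output and that perfect accuracy can be secured by eliminating recurrence — so that worst-case transition count and zero predictive loss coexist.
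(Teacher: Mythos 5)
Your proof is correct, but it takes a genuinely different route from the paper's. The paper keeps a fixed set of $n$ recurring tasks with tiny costs $1/m$, interleaves a full round of all $n$ tasks before every decision point so that the (perfectly accurate) LV predictions never reveal anything beyond the next $n$ requests, and then pads with repeats of the task corresponding to $A$'s precomputed next state until exactly that state saturates. You instead run the classic adaptive MTS lower bound --- hit whatever state $A$ currently occupies with a unit-cost task --- which is oracle-agnostic and immediately forces $n-1$ transitions, and you then secure $\eta = 0$ by making every task a one-off so that the oracle's ``next request time'' predictions are vacuously correct. Your argument is shorter and makes explicit that the forcing is independent of anything the oracle could say; the paper's version is stronger in spirit, since its oracle issues non-trivial, finite, exactly correct predictions about genuinely recurring tasks and the algorithm still fails, whereas your $\eta = 0$ leans on the convention that a never-repeated task has next request time $+\infty$ and contributes no error. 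One small detail to tighten: within a phase the vectors ``cost $1$ in state $s$'' are distinct for distinct $s$, but across phases they would recur and falsify the ``never again'' predictions, so to keep every task unique over the whole input you should perturb the nonzero entry (any distinct costs $\ge 1$ in the targeted coordinate still saturate that state and nothing else); alternatively, restricting attention to a single phase already establishes the per-phase claim. With that noted, both the transition count and the zero-loss claim go through.
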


Before getting into the proof of this theorem, we provide some intuition behind the strategy that the adversary chooses. Suppose we have any arbitrary length sequence of tasks $T_1, T_2, \dots $. The first key observation is that since $A$ is a deterministic algorithm, if the adversary is aware of how $A$ processed the first $i$ tasks, then the adversary is able to reliably predict how $A$ will process task $i + 1$. Thus, by induction the adversary is able to fully predict how $A$ will behave at any point on any sequence of task requests, and we use this fact to our advantage.

The second important observation is more subtle, and is discussed in \cite{BLS}. In essence, if the adversary chooses small task processing costs, then an online algorithm has less information each time it is faced with a decision. If instead of $k$ small tasks $T$ we have one large task $kT$, where the processing cost of $T$ in each state is scaled up by $k$, then one appearance of $kT$ would be equivalent to seeing $T$ and being given a ``guarantee" that the next $k - 1$ tasks are also $T$. Thus, our adversary chooses tasks with very small processing costs.

Now given that the adversary plans to precompute $A$'s behavior and use small small task sizes, all that's left is to devise an input that renders the information from a machine learned oracle as trivial as possible. Since the LV oracle predicts the next time each task appears, an adversary can obscure any ``useful" information by ensuring that every time $A$ needs to make a decision, each task will immediately be requested one time each. Thus, if there are $k$ tasks, $A$ can only make its decisions with knowledge of the next $k$ requests. Given how flexible the adversary can be, this reveals no useful information about each state's saturation time, which results in a high competitive ratio.

In summary, the issues with using the LV oracle for the uniform task system problem are

\begin{itemize}
    \item The adversary can set task processing costs as small as it likes, elongating the input and granting the adversary arbitrary flexibility.
    
    \item The competitive ratio of an algorithm for uniform task systems depends on the number of transitions per phase and order of saturation, which are only loosely related to the oracle's predictions of task reappearance time. In contrast, the LV oracle works well for the cacheing problem, where the only thing that matters is how much time elapses before a page is requested again. As a consequence,
    
    \item There exists an adversary strategy that works against even a zero error LV oracle by rendering the oracle's prediction more or less meaningless.
    
\end{itemize}

Now we give the formal proof of Theorem \ref{thm:lv_bad}:

\begin{proof} First the adversary sets the number of tasks to be $n$, picks any $m > n$, and defines the tasks to be:

\begin{align*}
    T_1 &= [\tfrac{1}{m}, 0, 0, 0 \dots 0] \\
    T_2 &= [0, \tfrac{1}{m}, 0, 0 \dots 0] \\
    \vdots \\
    T_n &= [0, 0, 0, 0 \dots \tfrac{1}{m}]
\end{align*}

That is, processing task $T_i$ incurs 0 cost in every state except $s_i$. The adversary alternates between two steps to generate the input:

\begin{thmsteps}
  \item Request each task exactly once.
  \item Since $A$ is deterministic, the adversary precomputes the next state $A$ transitions to, which we call $s_i$. The adversary then appends task $T_i$ to the input some $m'$ times, where state $s_i$ is saturated on exactly the $m'$th request. By the definition of BLS phase algorithm, upon fulfilling the $m'$th request, $A$ must now choose an unsaturated state to transition to, and the adversary returns to step $1$.
\end{thmsteps}

Following this process, at the beginning of a phase the adversary begins with:

$$T_1, T_2, \dots T_n, \dots$$

Now moving on to \textbf{Step 2}, the adversary determines $s_i$ \footnote{Here it's possible that $A$ is already in $s_i$ from the previous phase, in which case $A$ skips the first state transition of this phase.}, completes \textbf{Step 2}, and then performs \textbf{Step 1} again to continue the input:

$$T_1, T_2, \dots T_n, T_i^{m - 1}, T_1, T_2, \dots T_n, \dots$$

where $T_i^{m - 1}$ denotes requesting task $i$ exactly $m - 1$ times in a row. We observe that task $T_i$ has now been requested exactly $m$ times, so state $s_i$ becomes saturated right upon processing the last request for task $T_i$ and $A$ must make a transition. Notably, the oracle currently predicts that every task is about to be requested once, and is unable to ``see" past the next request for $T_n$. The adversary moves on to \textbf{Step 2} and again precomputes which state $A$ will transition to, say $s_j$, and continues the input as follows:

$$T_1, T_2, \dots T_n, T_i^{m - 1}, T_1, T_2, \dots T_n, T_j^{m - 2} \dots$$

And after performing \textbf{Step 1} again gets:

$$T_1, T_2, \dots T_n, T_i^{m - 1}, T_1, T_2, \dots T_n, T_j^{m - 2}, T_1, T_2, \dots T_n, \dots$$

and so on, alternating between \textbf{Step 1} and \textbf{Step 2} until the end of the phase. The adversary performs $\textbf{Step 1}$ exactly $n$ times, so each state's corresponding task has been requested at most $n$ times by the time $A$ chooses to transition to it. Since we chose $m$ so that $m > n$, and since each task must be requested $m$ times for its corresponding state to be saturated, we know no state will be saturated before the algorithm chooses to transition to it. Thus, $A$ visits each of our $n$ states for a total of at least $n - 1$ state transitions.

\end{proof}

By lemma \ref{lemma:trans_to_ratio}, $A$ is $\Theta(n)$-competitive. 

\subsection{Successful OMLA model}

With these issues in mind, we propose a different advice model that gives the algorithm more relevant and useful information. We give our online algorithms access to a Predicted Saturation Time (PST) oracle; a machine learned type oracle that for each phase attempts to predict the saturation time of each state with no ties (we assume that the PST oracle breaks ties randomly, and this makes very little difference in the results).

Specifically, at the beginning of each phase, the PST oracle outputs an array $h$, where $h[i]$ is the predicted saturation time of state $s_i$. The error $\eta$ of PST's prediction is defined to be the total $\ell_1$ error of our predictions. That is, if the true saturation time of state $s_i$ is $t_i$, then $\eta = \displaystyle \sum_{i = 1}^n |h[i] - t_i|$. Once again, the adversary controls both the PST oracle's predictions and the sequence of task requests, but is restricted to giving inputs with $\eta \leq \eta_0$ prediction error.

The Last Predicted State (LPS) algorithm queries the PST oracle at the beginning of each phase and always transitions to the unsaturated state with the latest predicted saturation time. Label the states $s_1, s_2, \dots s_n$, where $h[i] < h[i + 1]$. When the LPS algorithm's current state $s_i$ becomes saturated, suppose the $k$ unsaturated states remaining are $s_{i_1}, s_{i_2}, \dots s_{i_k}$, where $i_j < i_{j + 1}$. The LPS algorithm always transitions to state $s_{i_k}$. Importantly, we know that $i_k < i$, since when both $s_i$ and $s_{i_k}$ were unsaturated states, the LPS algorithm chose to transition to $s_i$. A nice corollary is the LPS algorithm transitions to states in strictly decreasing order by their label.

We also define the function

$Z(m) = \begin{cases} 
      \frac{m^2 - 1}{2} & m \equiv 1 \mod 2 \\
      \frac{m^2}{2} & m \equiv 0 \mod 2 \\
   \end{cases}$
   
and we define $Z^{-1}(\eta) = \lfloor \sqrt{2\eta + 1} \rfloor$, which has the property that for all $m$ and $Z(m) \leq \eta < Z(m + 1)$, $Z^{-1}(\eta) = m$.

\begin{theorem} Given a uniform metrical task system and $\eta_0$, the optimal adversary strategy for the LPS algorithm forces it to make $m = Z^{-1}(\eta_0)$ state transitions.
\label{theorem:adv_reverse}
\end{theorem}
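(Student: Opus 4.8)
The plan is to prove the two matching bounds that together pin the optimum at $m = Z^{-1}(\eta_0)$: an adversary strategy that forces $m$ transitions while spending error at most $\eta_0$, and a proof that no strategy forces $m+1$ transitions within that budget. First I would record the structure of the LPS trajectory. Since LPS opens each phase in the latest-predicted state $s_n$ and thereafter always jumps to the highest-labeled unsaturated state, the visited states have strictly decreasing labels $n = c_0 > c_1 > \cdots > c_{M-1}$; and because it leaves a state only at the instant that state saturates, the corresponding saturation ranks are strictly increasing, $r_0 < r_1 < \cdots < r_{M-1}$, with the final visited state being the last to saturate so that $r_{M-1} = n$ (here $M$ is the number of transitions). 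Using that the tie-free oracle output and the realized schedule are orderings, I identify both with permutations and relabel so that $h[i] = i$; then the realized saturation order is a permutation $t$, the error is $\eta = \sum_i |i - t_i|$, and the state labeled $c_j$ contributes exactly $|c_j - r_j|$ to $\eta$.

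For achievability I would exhibit the reversal adversary: keep the predicted and realized orders equal on the bottom $n-m$ states, and on the top block reverse them, sending the state labeled $n-j$ to saturation rank $n-m+1+j$ for $j = 0, \dots, m-1$. Tracing LPS on this instance, it starts at $s_n$, which now saturates first within the block, and is then pushed down through $s_{n-1}, s_{n-2}, \dots, s_{n-m+1}$ one at a time, making exactly $m$ transitions; the bottom states never interfere because they all saturate before the block. The error is concentrated on the block and equals $\sum_{j=0}^{m-1} |(n-j) - (n-m+1+j)| = \sum_{j=0}^{m-1} |m-1-2j|$, which evaluates to $Z(m)$ in both parities of $m$. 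Since $m = Z^{-1}(\eta_0)$ gives $Z(m) \le \eta_0$, this strategy is legal and forces $m$ transitions.

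The harder direction, and the main obstacle, is the lower bound: I must show that \emph{every} instance on which LPS makes $M$ transitions has $\eta \ge Z(M)$, not merely the reversal instance. Since $\eta$ is a sum of nonnegative terms, $\eta \ge \sum_{j=0}^{M-1} |c_j - r_j|$, the contribution of the visited states alone, so it suffices to bound this sum. Writing $a_j = n - c_j$ and $b_j = n - r_j$, the trajectory structure makes $(a_j)$ a strictly increasing integer sequence with $a_0 = 0$ and $(b_j)$ a strictly decreasing one with $b_{M-1} = 0$, and $\sum_j |c_j - r_j| = \sum_j |a_j - b_j|$. I would then pair index $j$ with $M-1-j$ and apply the triangle inequality, $|a_j - b_j| + |a_{M-1-j} - b_{M-1-j}| \ge (a_{M-1-j} - a_j) + (b_j - b_{M-1-j}) \ge 2(M-1-2j)$, using that a gap in a strictly monotone integer sequence is at least its number of steps. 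Summing these pairwise bounds (treating the middle term separately when $M$ is odd) yields exactly $Z(M)$, matching the two cases in the definition of $Z$. It is precisely the discreteness of the saturation ranking that makes this bound hold for every schedule and prevents the adversary from realizing the same ordering with vanishing $\ell_1$ error, which is why I reduce to the permutation normalization at the outset.

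Finally I would combine the two bounds through the stated property of $Z^{-1}$. For $m = Z^{-1}(\eta_0)$ we have $Z(m) \le \eta_0 < Z(m+1)$. The reversal strategy forces $m$ transitions within budget, so the optimum is at least $m$; and if some strategy forced $M \ge m+1$ transitions it would incur $\eta \ge Z(M) \ge Z(m+1) > \eta_0$, exceeding the budget, so the optimum is at most $m$. Hence the optimal adversary forces exactly $m = Z^{-1}(\eta_0)$ transitions, as claimed.
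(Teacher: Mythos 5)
Your proof is correct and follows the paper's overall skeleton (the reversal adversary for achievability, plus a minimum-error argument for the converse, combined via the defining property of $Z^{-1}$), but your proof of the key lower bound is genuinely different from the paper's. The paper establishes that no instance forcing $m$ transitions has error below $Z(m)$ by an exchange argument (its Lemma 3.2): it argues the optimal adversary ``might as well'' make the predicted and true saturation times contiguous, parametrizes the remaining freedom by an offset $k$ between the two blocks, computes $\eta = \frac{m^2 + k^2 - 1}{2}$ (or $\frac{m^2+k^2}{2}$), and minimizes over $k$. You instead bound \emph{every} instance directly: restrict $\eta$ to the visited states, observe that along the LPS trajectory the predicted values are strictly decreasing and the true saturation values strictly increasing, and apply the pairing/triangle-inequality bound $|a_j - b_j| + |a_{M-1-j} - b_{M-1-j}| \ge 2(M-1-2j)$, which sums to exactly $Z(M)$ in both parities. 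Your route is shorter and arguably tighter as a proof, since it avoids the informal ``might as well'' normalization steps and bounds all adversary inputs at once rather than characterizing the optimizer; the paper's route buys an explicit description of the worst-case input, which it reuses later (e.g.\ in Figure 1 and in the optimality theorem for general BLS phase algorithms). One small point to tighten: you announce a reduction to rank permutations (``relabel so that $h[i]=i$''), but $\eta$ is defined on saturation \emph{times}, and the $\ell_1$ distance of times is not the footrule distance of ranks in general. Fortunately your pairing argument does not actually need the reduction --- it applies verbatim to the true integer-valued, strictly monotone time sequences along the trajectory (consecutive distinct integers differ by at least one, which is all the inequality $a_{M-1-j}-a_j \ge M-1-2j$ uses) --- so you should either drop the normalization or state that the argument runs on the times directly.
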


\begin{proof} Suppose we label the states $s_1, s_2, \dots s_n$ in order of their predicted saturation time. To generate an input, the adversary must assign the predicted saturation times $h[1], h[2], \dots h[n]$ and the true saturation times $t_1, t_2, \dots t_i$. We first prove that setting the true saturation times of the last $m$ states in reverse is necessary to force the LPS algorithm to make $m$ state transitions. We then prove that the minimum error needed to force $m$ state transitions is $\eta = Z(m)$, which implies that with $\eta_0 < Z(m + 1)$ the adversary cannot force the LPS algorithm to make $m + 1$ state transitions.

Since the LPS algorithm always transitions to the unsaturated state with the latest predicted saturation time, it will always transition to state $s_n$ at the start of the phase. Thus, from the adversary's point of view, if we want to force the algorithm to make $m$ state transitions, we must have at least $m - 1$ states with true saturation time greater than $t_n$ (the true saturation time of state $s_n$), or else there won't even be enough unsaturated states left to eventually make $m$ state transitions in the phase.

Suppose that when $s_n$ is saturated the LPS algorithm transitions to some state $s_i$. We now need to force $m - 1$ more state transitions, and thus need at least $m - 2$ states with true saturation times greater than $t_i$. We can observe that by the definition of the LPS algorithm once again each of these $m - 2$ states have an earlier predicted saturation time than state $s_i$. By induction, the $m - 1$ states with true saturation time greater than $t_n$ must be saturated in decreasing order by predicted saturation time, so the LPS algorithm constantly transitions to the state with highest predicted saturation time which in reality is the next state to be saturated.

The $m - 1$ states we choose to be saturated after $s_n$ will end up having high true saturation times, so to minimize prediction error the best choice for them are the states that already have high predicted saturation times, namely  $s_{n - m + 1} \dots s_{n - 1}$. If (for convenience and without loss of generality) we let $t_1 = 1$, the adversary assigns predicted and true saturation times as follows:

\begin{figure}[H]
    \centering
    \includegraphics[width = 15cm]{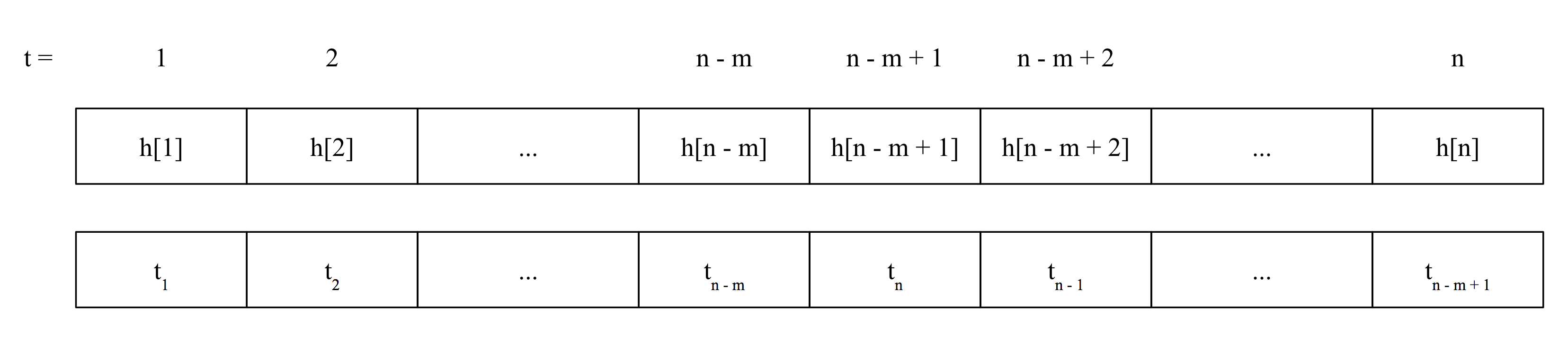}
    \caption{The true saturation times of the last $m$ states is reversed relative to their predicted saturation times. Since our analysis is confined to a single phase, without loss of generality we can assume the phase starts at $t = 1$.}
    \label{fig:adv_input}
\end{figure}

We now prove that this strategy indeed optimal, in the sense that it achieves the goal of forcing the LPS algorithm to make $m$ state transitions while minimizing the PST oracle's predictive error:

\begin{lemma} The optimal adversary strategy against the LPS algorithm assigns $t_i = h[i]$ for $i \leq n - m$ and $t_i = h[(n - i) + (n - m + 1)]$ for $n - m + 1 \leq i \leq n$.
\label{lemma:adv_assigns}
\end{lemma}

\begin{proof}

To have an easier time working with $\eta$ and the adversary's behavior, instead of thinking of $\eta_0$ as a restriction on the adversary, we think of predictive error as a resource that the adversary can use to force an online algorithm to behave in certain ways. Thus, given a certain task such as forcing the LPS algorithm to make $m$ state transitions, the adversary's strategy should try to do so while incurring as little predictive error as possible so it can be accomplished for as small of values of $\eta_0$ as possible.

First we observe that by the arguments made above, the $m$ state transitions will happen between states $s_{n - m + 1}, \dots s_n$. Since here the adversary is only concerned with forcing $m$ state transitions with as little error as possible, the adversary ``might as well" set $h[i] = t_i$ for states $s_1, s_2, \dots s_i, \dots s_{n - m}$ and incur 0 predictive error on them.

Next we show that given $h[n - m + 1] < h[n - m + 2] \dots < h[n]$ and $t_n < t_{n - 1} \dots < t_{n - m + 1}$, the optimal adversary strategy is to set $h[i] = h[i - 1] + 1$ and $t_i = t_{i - 1} - 1$. Suppose we have some state $s_k$ where $t_k < h[k]$ and $t_k < t_{k - 1} - 1$ for some positive integer $c$. Since there are no true saturation times between $t_k$ and $t_{k - 1}$, we can increase the true saturation time of $s_k$ by one (update $t_k$ to be $t_k + 1$) and decrease the total prediction error. Symmetrically, if $t_k > h[k]$ and $t_k > t_{k - 1} + 1$, if we decrease the true saturation time of $s_k$ by one (update $t_k$ to be $t_k - 1$) then we decrease the total prediction error.

Since predicted saturation times are in increasing order by state and true saturation times are in decreasing order, at most one state can have prediction error 0. Thus, the result of the above is that for every true saturation time assignment that is not contiguous, there exists an assignment with contiguous times that the LPS algorithm runs on with the same cost. A similar argument applies to show that the optimal adversary strategy also sets the predicted saturation times to be contiguous.

Finally, we prove that the optimal adversary strategy is to set $t_i = h[n - i + n - m + 1]$. We've established that the optimal adversary strategy involves assigning $h[i] = h[i - 1] + 1$ and $t_i = t_{i - 1} - 1$. We observe that if $t_{n - m + 1} < h[n - m + 1]$ or $t_n > h[n]$, we can always lower our total prediction error by translating either our predicted saturation times or true saturation times so that $t_{n - m + 1} = h[n - m + 1]$ or $t_n = h[n]$, respectively.

Now suppose $h[n - m + 1] - t_n = k$ for some integer $k$. It suffices to show that if $k = 0$ we minimize predictive error (Figure \ref{fig:k_opt}).

\begin{figure}[H]
    \centering
    \includegraphics[width = 15cm]{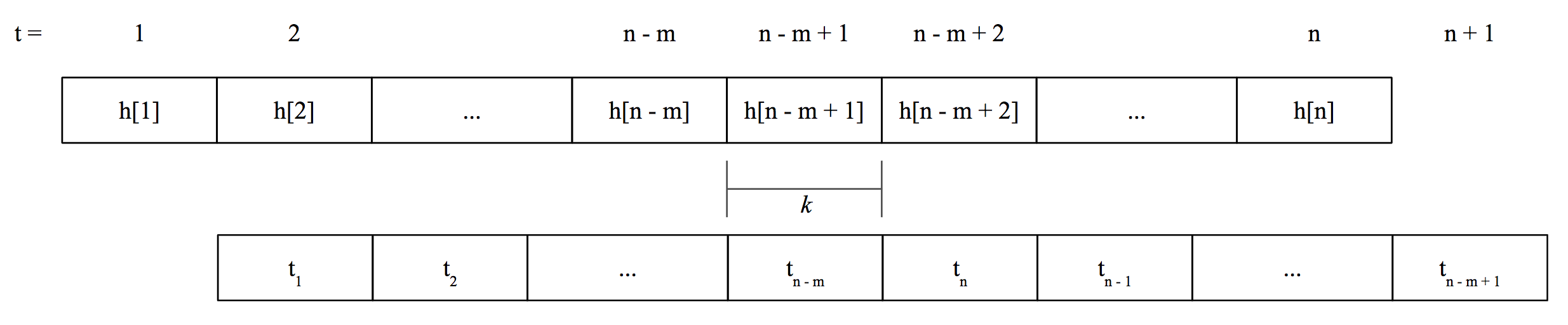}
    \caption{An example of the adversary's input when $k = 1$. Note that the predicted and true saturation times of $s_{n - m + 1}, \dots s_n$ are contiguous and that Figure \ref{fig:adv_input}, the optimal input, is identical except $k = 0$. Here $k$ is nonnegative, but the same argument applies for $k < 0$.}
    \label{fig:k_opt}
\end{figure}

It follows that $t_{n - m + 1} - h[n - m + 1] = m - 1 - k$, and $h[n] - t_n = m - 1 + k$. Suppose $m - 1 - k$ is even. The total prediction error is the sum of each state's prediction error, or 

\begin{align*}
    \eta &= (m - 1 - k) + (m - 3 - k) + \dots + 2 + 0 + 2 + \dots + (m - 1 + k) \\
    &= \frac{(m - 1 - k + 2)(\frac{m - 1 - k}{2})}{2} + \frac{(m - 1 + k + 2)(\frac{m - 1 + k}{2})}{2} \\
    &= \frac{(m + 1 - k)(m - 1 - k)}{4} + \frac{(m + 1 + k)(m - 1 + k)}{4} \\
    &= \frac{m^2 + k^2 - 1}{2}
\end{align*}

A similar calculation for odd $m - 1 - k$ gives $\eta = \frac{m^2 + k^2}{2}$. Thus, in either case the prediction error is minimized by picking $k = 0$, so the optimal adversary strategy is to set $t_i = h[n - i + n - m + 1]$, and this concludes the proof of lemma \ref{lemma:adv_assigns}.

\end{proof}

To complete the proof of Theorem \ref{theorem:adv_reverse}, we compute the prediction error $\eta = \displaystyle \sum_{i = 1}^n |h[i] - t_i|$ of the adversary strategy to show that $\eta \leq \eta_0$ as necessary. For odd $m$, we can see that 

\begin{align*}
    |h[i] - t_i| &= 0 \text{ for } i \leq n - m \\
    |h[n] - t_n| = |h[n - m + 1] - t_{n - m + 1}| &= m - 1\\
    |h[n - 1] - t_{n - 1}| = |h[n - m + 2] - t_{n - m + 2}| &= m - 3 \\
    &\vdots \\
    \left|h\left[\frac{2n - m + 1}{2}\right] - t_{\frac{2n - m + 1}{2}}\right| &= 0
\end{align*}

Thus, for odd $m$, the absolute prediction error is:

\begin{align*}
    \eta &= 2(m - 1) + 2(m - 3) + \dots + 2(2) + 1(0) \\
    \frac{\eta}{2} &= m - 1 + m - 3 \dots + 2 + 0 \\
    &= \frac{(m)(m + 1)}{2} - (1 + 3 + 5 + \dots + m) \\
    &= \frac{(m)(m + 1)}{2} - \left(\frac{m + 1}{2}\right)^2 \\
    &= \frac{m^2 - 1}{4} \\
    \eta &= \frac{m^2 - 1}{2} = Z(m)
\end{align*}

and a similar calculation gives $\eta = \frac{m^2}{2} = Z(m)$ for even $m$. Thus, if $\eta_0 \geq Z(m)$ the adversary can force the LPS algorithm to make $m$ state transitions. Since our adversary strategy forces $m$ state transitions with the lowest possible error, if $\eta_0 < Z(m + 1)$ the adversary cannot force $m + 1$ state transitions. Thus, the adversary strategy of reversing the last $m = Z^{-1}(\eta_0)$ states ordered by predicted saturation time is optimal against the LPS algorithm.

\end{proof}

By lemma \ref{lemma:trans_to_ratio}, the LPS algorithm is $\Theta(\sqrt{\eta_0})$ competitive. Notably, the LPS algorithm has competitive ratio exactly $1$ if $\eta_0 = 0$.

\begin{theorem}The LPS algorithm makes the fewest state transitions per phase out of all determinitic BLS phase algorithms when considering each algorithm's worst case input.
\end{theorem}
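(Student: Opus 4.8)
The plan is to recast the statement as a claim about error budgets and then exploit the extremality of the LPS selection rule. By Lemma~\ref{lemma:trans_to_ratio} and Theorem~\ref{theorem:adv_reverse}, the worst-case number of transitions of a deterministic BLS phase algorithm $A$ equals the largest $m$ for which the adversary can force $m$ transitions using error at most $\eta_0$; for LPS this quantity is exactly $Z^{-1}(\eta_0)$, since forcing $m$ transitions on LPS requires error exactly $Z(m)$. Thus it suffices to prove the dual statement: for every deterministic BLS phase algorithm $A$ and every $m$, the minimum prediction error needed to force $A$ into $m$ transitions is at most $Z(m)$. An algorithm that is cheaper to force makes weakly more transitions for any fixed budget, so this inequality shows LPS is the hardest algorithm to drive and hence makes the fewest transitions in the worst case.

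First I would set up the same clean model used in Theorems~\ref{thm:lv_bad} and~\ref{theorem:adv_reverse}: since $A$ is deterministic, the adversary precomputes $A$'s entire trajectory, and by taking task costs arbitrarily small it obtains fine control over saturation times. Fixing contiguous predictions $h[i]=i$, the prediction error of any input is exactly the Spearman footrule distance $\sum_t |t - \sigma(t)|$ between the predicted order and the realized saturation order $\sigma$, and forcing transitions amounts to choosing $\sigma$ so that $A$ is repeatedly kicked out of its current state. The key extra degree of freedom, beyond the reversal used against LPS, is that the adversary may \textbf{pre-saturate} states: by saturating a low-predicted state at (roughly) its own predicted time it pays almost no error while removing that state from the pool of options $A$ can retreat to.

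The heart of the argument is the extremality of LPS's rule: at any decision point with unsaturated set $U$, LPS moves to $\arg\max_{u\in U} h[u]$, the state whose true saturation time must be dragged the farthest below its prediction to make it saturate next, whereas every other algorithm moves to a state of predicted time at most LPS's and can therefore be punished at no greater cost. Concretely, I would describe an adversary for general $A$ that reserves the top consecutive block of high-predicted states and reverses it exactly as in Theorem~\ref{theorem:adv_reverse}, while using cheap pre-saturations of low-predicted states to deny $A$ any ``retreat'': whenever $A$ moves to a low-predicted state, that state has nearly its predicted true saturation time, so punishing the move costs almost nothing and pushes $A$ toward the reserved block, where it is forced out repeatedly. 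Every retreat transition is thus essentially free, while the expensive transitions all live inside a reversed block of size at most $m$, whose footrule is at most $Z(m)$; summing, the total error to force $m$ transitions is at most $Z(m)$, attained when $A$ never retreats, as LPS does.

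I would make this rigorous by induction on $m$ (equivalently on $|U|$), carrying an invariant that records the reserved block, $A$'s current position, and the residual budget, and whose inductive step states that substituting the maximal-prediction choice for $A$'s actual choice only increases the error the adversary must still spend. The main obstacle is precisely this comparison: the adversary's per-transition costs are coupled, since pre-saturating a state now alters both the remaining time-slots and $A$'s future options, so a purely local exchange does not close the argument and one must verify that the ``free retreat, reversed block'' accounting is simultaneously achievable for every $A$ and dominated by LPS's all-maximal trajectory. Secondary care is needed with the starting-state convention (the off-by-one in counting the initial transition, already present in Lemma~\ref{lemma:trans_to_ratio} and the footnote to Theorem~\ref{thm:lv_bad}) and with the oracle's tie-breaking, which perturb only lower-order terms but must be pinned down for the equality case against LPS.
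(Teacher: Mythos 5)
Your reduction to the dual statement --- that any deterministic BLS phase algorithm $A$ can be forced to make $m$ transitions with error at most $Z(m)$ --- is exactly the right framing, and your setup (precompute $A$'s trajectory, set $h[i]=i$, saturate the low-predicted states on schedule so that $A$ is trapped in the top block) matches the paper's proof. The gap is in how you drive $A$ around the top block. You fix the realized saturation order of the reserved block to be the reversal ``exactly as in Theorem~\ref{theorem:adv_reverse}.'' Against a general $A$ this fails: the reversal only chases LPS, which by construction always sits in the state scheduled to saturate next; an arbitrary $A$ could transition directly to the state you have scheduled to saturate last and remain there for the rest of the phase, making a single transition inside the block. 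The adversary must instead commit to the saturation order adaptively, always saturating next whichever state $A$ just entered, so the realized order of the top $m$ states is an arbitrary permutation dictated by $A$, not the reversal.

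What closes the argument is then not the exchange induction you sketch (and concede you cannot complete), but a purely combinatorial fact: every permutation of $m$ elements has Spearman footrule distance at most $Z(m)$ from the identity, with the reversal as the maximizer --- this is the paper's Lemma~\ref{lemma:adv_can_produce}, proved by swapping the extreme elements into the extreme positions and inducting on $m-2$. With that bound, chasing any $A$ through the top block costs at most $Z(m)$ regardless of which permutation $A$ induces, and no per-step comparison between $A$'s choice and LPS's maximal choice is needed. Your dynamic pre-saturation to punish ``retreats'' is likewise unnecessary: once $t_i = h[i] = i$ for all $i \le n-m$, every low-predicted state is already saturated by the time the chase begins, so retreats are impossible by the definition of a BLS phase algorithm.
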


\begin{proof} Call the worst case number of transitions per phase of the LPS algorithm $m$, given by the adversary strategy above. We first show how an adversary would create an input that forces at least $m$ state transitions per phase for any deterministic BLS phase algorithm. We then show that the adversary can do so with the same $\eta_0$ as for the LPS algorithm, which implies that no deterministic BLS phase algorithm performs better than the LPS algorithm.

Suppose we have some deterministic BLS phase algorithm $A$. Again using the fact that the adversary can precompute the behavior of any deterministic algorithm, the adversary does as follows:

Assume without loss of generality that our phase starts at time $1$. First, set the predicted saturation times of the $n$ states to be $1, 2, \dots n$, and label the states $s_1, s_2, \dots s_n$ in order of predicted saturation time. Call the set $S_{> n - m} \subset S$ the set of states indexed greater than $n - m$. Next, set the true saturation times of states $s_1$ through $s_{n - m}$ to be the same as their predicted saturation times; that is, $t_i = i$ for $i \leq n - m$. The adversary also plans for the true saturation time of all states in $S_{> n - m}$ to be greater than $n - m$. Since BLS phase algorithms can only transition to unsaturated states, this guarantees that at time $n - m$, $A$ is in a state in $S_{>n - m}$ and that all states in $S_{> n - m}$ are currently still unsaturated.

Now, if we can guarantee that $A$ must transition to each state in $S_{>n - m}$ before the end of the phase, we will have constructed an input that forces $A$ to make at least $m$ transitions in this phase. To this end, let state $s$ be the first state in $S_{>n - m}$ that the algorithm transitions to. The adversary produces the rest of the input by making sure that the next state to be saturated is always the state that $A$ just transitioned to (and we prove this is possible in lemma \ref{lemma:adv_can_produce}). More formally, set the true saturation time of state $s$ to be time $n - m + 1$, and for $n - m + 1 \leq i < n$, set the true saturation time of state $\bsigma(i)$ to be $i + 1$. This means for all $m - 1$ values of $i$, at time $i + 1$ the algorithm transitions out of state $\bsigma(i)$ and into a new unsaturated state. Since the algorithm made some nonnegative number of state transitions before transitioning to state $s$, this gives an adversary strategy that can force any deterministic algorithm to make at least $m$ state transitions per phase. Thus, given the same max prediction error, an adversary can force any deterministic BLS phase algorithm to make at least as many state transitions as the LPS algorithm, which implies that we cannot construct an algorithm that does better than the LPS algorithm.
\end{proof}

\begin{lemma}
An adversary can saturate states $s_{n - m + 1} \dots s_n$ in any order with at most $Z(m)$ predictive error.
\label{lemma:adv_can_produce}
\end{lemma}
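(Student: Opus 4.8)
The plan is to reduce the statement to a purely combinatorial fact about permutations and then reuse the arithmetic already carried out in Theorem~\ref{theorem:adv_reverse}. As in the setup of that theorem, take the predicted saturation times of the $m$ states $s_{n-m+1},\dots,s_n$ to be the consecutive integers $n-m+1,\dots,n$, so that after relabeling they are the unit-spaced values $1,2,\dots,m$. To force these states to saturate in a prescribed order, the adversary assigns true saturation times whose sorted order realizes that order; the cleanest choice is to reuse the very pool of predicted values $\{1,\dots,m\}$ and merely permute which state receives which value. Exactly as in the proof of Theorem~\ref{thm:lv_bad}, the adversary can realize any such assignment of integer saturation times by scaling the task-processing costs down and padding the input, while leaving the first $n-m$ states on schedule (and hence error-free), so this is a legitimate adversary strategy.

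Under this strategy, if the prescribed order corresponds to a permutation $\pi$ of $\{1,\dots,m\}$ (state $i$ is made to saturate in position $\pi(i)$), then the incurred $\ell_1$ error is exactly $\sum_{i=1}^m |i-\pi(i)|$, the Spearman footrule distance of $\pi$. Thus the lemma follows once I show $\max_{\pi}\sum_{i=1}^m |i-\pi(i)| = Z(m)$. The reverse permutation $\pi(i)=m+1-i$ already attains $Z(m)$ by the identical computation performed in Theorem~\ref{theorem:adv_reverse}, since the reversed assignment analyzed there is precisely this case; so only the upper bound $\sum_i|i-\pi(i)|\le Z(m)$ remains to be established.

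For the upper bound I would use a cut-counting identity. Writing $|i-\pi(i)|$ as the number of integer thresholds $t$ with $\min(i,\pi(i))\le t<\max(i,\pi(i))$ and summing over $i$ gives $\sum_i|i-\pi(i)|=\sum_{t=1}^{m-1}2c_t$, where $c_t=|\{i\le t:\pi(i)>t\}|=|\{i>t:\pi(i)\le t\}|$ counts the elements crossing the cut between $\{1,\dots,t\}$ and $\{t+1,\dots,m\}$; the two counts agree because $\pi$ is a bijection. Since at most $\min(t,m-t)$ elements can cross any cut, $\sum_i|i-\pi(i)|\le 2\sum_{t=1}^{m-1}\min(t,m-t)$, and evaluating this sum yields $m^2/2$ for even $m$ and $(m^2-1)/2$ for odd $m$, i.e.\ exactly $Z(m)$; the reverse permutation saturates every bound $c_t=\min(t,m-t)$, so the maximum is attained there and equals $Z(m)$.

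The main obstacle is establishing that the reverse order is genuinely extremal among all orders, which is exactly the bound $c_t\le\min(t,m-t)$ together with its tightness; everything else is bookkeeping already done in Theorem~\ref{theorem:adv_reverse}. The cut-counting identity is what makes this clean. The one point requiring care is the realizability claim of the first paragraph, namely that permuting integer-valued predicted times yields an actual task sequence that saturates the states in the desired order while incurring no additional error on the first $n-m$ states; this follows from the small-cost padding construction of Theorem~\ref{thm:lv_bad}.
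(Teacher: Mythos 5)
Your proof is correct, but it reaches the key combinatorial bound by a genuinely different route than the paper. Both arguments reduce the lemma to showing that the Spearman footrule distance of any permutation $\pi$ of $\{1,\dots,m\}$ is at most $Z(m)$. The paper proves this by induction on $m$ in steps of two: it swaps element $m$ into position $1$ and element $1$ into position $m$, argues each swap cannot decrease the footrule distance, and then applies the inductive hypothesis to the inner permutation on $m-2$ elements together with the identity $Z(m)=Z(m-2)+2(m-1)$. You instead use the exact cut-counting identity $\sum_i|i-\pi(i)|=\sum_{t=1}^{m-1}2c_t$ with $c_t=|\{i\le t:\pi(i)>t\}|=|\{i>t:\pi(i)\le t\}|$, bound each cut by $c_t\le\min(t,m-t)$, and check that $2\sum_t\min(t,m-t)=Z(m)$; I verified the identity, the per-cut bound, and the closed-form evaluation for both parities, and all are correct. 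Your approach buys a non-inductive, one-shot argument that also characterizes exactly when the maximum is attained (every cut saturated, as in the reversal), whereas the paper's swap argument is more elementary but needs base cases and is stated with a small imprecision (its $Sf(\pi')=Z(m-2)+2(m-1)$ should be an inequality combined with the inductive bound). You are also somewhat more explicit than the paper about the realizability of an arbitrary saturation order via the small-cost padding of Theorem~\ref{thm:lv_bad}, a point the paper's proof passes over silently.
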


\begin{proof}
This is equivalent to proving that the Spearman footrule (Sf) distance between two lists of $m$ elements is at most $Z(m)$. As is standard with working with identity-invariant measures of permutation distance, we simply consider the distance of a permutation of $1, 2, \dots m$ to the identity.

We prove the bound on Sf distance by induction, and first observe that for $m = 1$ and $m = 2$ the bound holds. Now suppose we know that the Sf distance of a permutation on $m - 2$ elements is at most $Z(m - 2)$, and consider any permutation $\pi$ on $m$ elements. We perform two swaps on $\pi$ to get a permutation $\pi'$ with $Sf(\pi) \leq Sf(\pi')$, and the result follows from induction

Let element $m$ be in position $j$, and some element $i$ be in position $1$. If we swap the positions of element $m$ and $i$, the movement of $m$ increases the Sf distance by exactly $j - 1$, and the movement of $i$ decreases the Sf distance by at most $j - 1$. The same argument applies to swapping element $1$ into position $m$, giving us a permutation $\pi'$ with $1$ in position $m$, $m$ in position $1$, and elements $2$ through $m - 1$ forming a permutation on positions $2$ through $m - 1$.

\begin{figure}[H]
    \centering
    \includegraphics[width = 12cm]{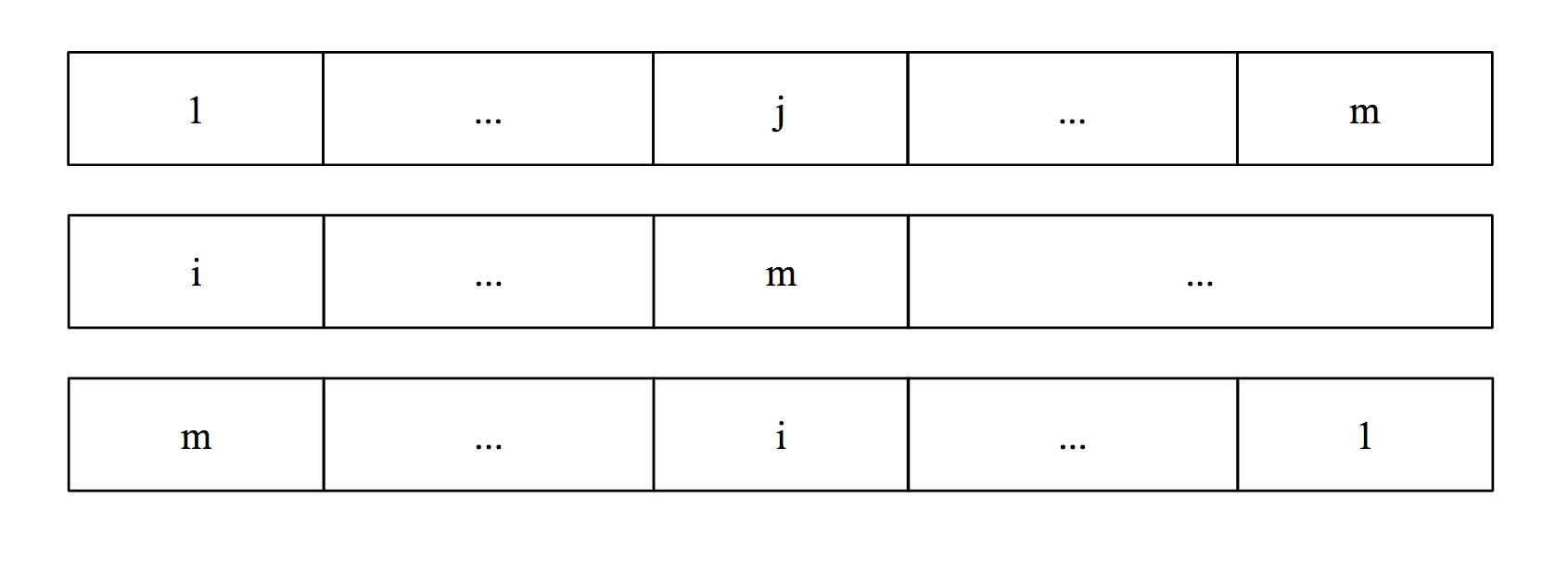}
    \caption{The identity (top), $\pi$ (middle), and $\pi'$ (bottom).}
    \label{fig:perm}
\end{figure}

This gives us $Sf(\pi) \leq Sf(\pi')$, and since elements $2$ through $m - 1$ are forming a permutation on $m - 2$ elements, our inductive hypothesis gives $Sf(\pi') = Z(m - 2) + 2(m - 1)$ which is equal to $Z(m)$ for both odd and even $m$. Thus, for any permutation $\pi$ on $m$ elements, $Sf(\pi) \leq Z(m)$.

\end{proof}

\subsection{Adjusting For Robustness}

The LPS algorithm as defined above relies on the quality of oracle's advice. If we imagine the oracle is very poorly trained and has up to $\eta_0 = n^2$ prediction error, our algorithm would run with competitive ratio $\Theta(n)$. Thus, we may wish to tweak our algorithm to perform well with low prediction error while still being robust to inputs with high prediction error. Here we make the standard assumption that the algorithm has no prior knowledge of $\eta_0$.

To maintain a low competitive ratio in the high error case, we have the LPS algorithm monitor its own progress. If it notices that it's making a lot of state transitions in the current phase, it switches to a known robust strategy. Let $k$ be some hyperparameter built into the algorithm that corresponds to this notion of ``making a lot of state transitions". The LPS algorithm runs normally in a phase until it makes $k$ state transitions, then switches to the BLS oblivious algorithm. There would be at most $n - k$ unsaturated states left, for a total expected $k + H_{n - k}$ state transitions. Since BLS's oblivious algorithm has competitive ratio $H_n$, we pick $k = H_n$.

Now we break the running of the LPS algorithm into two cases, based on whether the error $\eta_0$ is high enough to force $k = H_n$ state transitions and thus force the LPS algorithm to switch to the oblivious strategy.

\begin{textframe}

\noindent \textbf{Case 1:} $Z^{-1}(\eta_0) < H_n$, so it's impossible for the LPS algorithm to make $k = H_n$ or more state transitions per phase. In this case we never switch to the oblivious strategy, our previous analysis holds, and our algorithm is $\Theta(\sqrt{\eta_0})$ competitive.

\vspace{5mm}

\noindent \textbf{Case 2:} $Z^{-1}(\eta_0) > H_n$, so it's possible for the LPS algorithm to make $k = H_n$ or more state transitions per phase. In this case, we abandon our high-error predictor for the oblivious strategy after the $(H_n)$th state transition, leaving $n - H_n$ unsaturated states. Running the BLS oblivious algorithm on these $n - H_n$ states results in an expected $H_{n - H_n}$ state transitions, for a resulting competitive ratio of $\Theta(H_n + H_{n - H_n}) = \Theta(H_n) = \Theta(\log n)$ in the high error case.
\end{textframe}

This gives an overall competitive ratio of $\Theta(\min(\sqrt{\eta_0}, \log n))$.

\subsection{Comment on Randomized BLS Phase Algorithms}

Instead of examining any randomized algorithms, which we leave for future work, we instead prove the following bound on their performance:

\begin{theorem} No BLS phase randomized algorithm has competitive ratio lower than $H_m$ on uniform task systems, where $m = Z^{-1}(\eta_0)$.
\end{theorem}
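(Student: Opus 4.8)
The plan is to reduce, via Yao's principle, to the classical Borodin--Linial--Saks randomized lower bound, but applied only to the last $m = Z^{-1}(\eta_0)$ states. The point is that the error budget $\eta_0 \ge Z(m)$ is exactly large enough (by Lemma \ref{lemma:adv_can_produce}) that the adversary can realize \emph{any} saturation order on the states $s_{n-m+1}, \dots, s_n$ while keeping $\eta \le \eta_0$. Consequently the oracle's prediction, which the adversary fixes once and for all, carries no information distinguishing these orders, so on this sub-instance a randomized BLS phase algorithm is in exactly the oracle-free situation on $m$ states, where the $H_m$ bound is known to hold.

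Concretely, I would first fix the predicted saturation times to be $h[i] = i$ and, with zero error, set $t_i = i$ for $i \le n - m$, so that states $s_1, \dots, s_{n-m}$ saturate deterministically in increasing order while all of $s_{n-m+1}, \dots, s_n$ remain unsaturated past time $n-m$. Since a BLS phase algorithm never occupies a saturated state, at the moment only the last $m$ states remain unsaturated the algorithm must be sitting in one of them. I would then define the hard distribution $D$: assign the true saturation times of $s_{n-m+1}, \dots, s_n$ according to a uniformly random permutation of their (large) time slots. By Lemma \ref{lemma:adv_can_produce} every such permutation costs at most $Z(m) \le \eta_0$ predictive error, so every realization of $D$ is a legal input, and since $h$ is identical across all of them, the oracle is useless for predicting the realized order.

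Next I would run the standard counting argument against an arbitrary deterministic BLS phase algorithm on $D$. Conditioned on the set of already-saturated states, the next state to saturate is uniform among the remaining unsaturated ones, independently of the algorithm's (oracle-informed, deterministic) choices. Hence, writing $f(k)$ for the expected number of transitions remaining when $k$ of the last $m$ states are unsaturated and the algorithm occupies one of them, the probability the occupied state is next to saturate is $1/k$, giving the same recursion as the oblivious analysis,
\[
f(k) = \frac{1}{k} + f(k-1), \qquad f(m) = H_m .
\]
Transitions incurred among the first $n-m$ states can only increase the count, so discarding them is safe for a lower bound. Thus every deterministic algorithm makes at least $H_m$ transitions in expectation over $D$, while OPT, being clairvoyant, still pays $O(1)$ per phase by sitting in the last state to saturate. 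By Yao's principle the same expected transition count lower-bounds every randomized BLS phase algorithm on its worst-case input, and Lemma \ref{lemma:trans_to_ratio} converts this into the claimed $H_m$ bound on the competitive ratio.

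I expect the main obstacle to be the information-theoretic step rather than the arithmetic: one must argue rigorously that fixing a single prediction vector $h$ renders the oracle worthless against the uniform ensemble $D$, i.e.\ that no adaptive use of observed saturations combined with the fixed advice lets the algorithm beat the $1/k$ hitting probability. Care is also needed in the ``peeling'' step to guarantee the algorithm genuinely enters the last-$m$ block with all $m$ states unsaturated, so that the recursion legitimately starts at $k = m$, and in verifying that transitions among the first $n - m$ states only ever help the adversary.
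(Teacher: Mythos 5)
Your proposal is correct and follows essentially the same route as the paper: both fix a single prediction vector, use Lemma \ref{lemma:adv_can_produce} to certify that a uniformly random saturation order on the last $m = Z^{-1}(\eta_0)$ states stays within the error budget $\eta_0$, run the standard $f(k) = \tfrac{1}{k} + f(k-1)$ harmonic counting argument against an arbitrary deterministic BLS phase algorithm, and invoke Yao's principle. The only cosmetic difference is that the paper realizes the distribution concretely via tasks with $1/k$ processing costs (requesting each task $k-1$ times and then appending the random permutation), whereas you describe it directly at the level of saturation times; the substance is identical.
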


\begin{proof} If $A$ is a randomized algorithm, we say the expected cost of $A$ on task sequence $\T$ is $\overline{c}_A(\mathbf{T}) = \sum_{\bsigma} c(\bsigma, \mathbf{T}) \Pr(\bsigma | \mathbf{T})$, where $\Pr(\bsigma | \mathbf{T})$ is the probability $A$ follows schedule $\bsigma$ given the input $\mathbf{T}$. $\overline{w}(A)$ is the infimum over all $w$ such that $A$ is $w$-competitive in expectation, and $\overline{w}(S, d)$ for a task system $S, d$ is the infimum of $\overline{w}(A)$ over all randomized algorithms $A$.

Suppose $D$ is some probability measure over the set of infinite task sequences, $\mathbf{T}^{\leq j}$ is the first $j$ task requests of infinite task sequence $\mathbf{T}$, and $E_D[c_0(\mathbf{T}^{\leq j})]$ tends to infinity with $j$. Call $m_j$ the minimum over all deterministic online algorithms $A$ of $E_D[c_A(\mathbf{T}^{\leq j})]$.

Define $D$ as follows: Pick any large integer $k$ and set the tasks to be

\begin{align*}
    T_1 &= [1/k, 0, 0, 0 \dots 0] \\
    T_2 &= [0, 1/k, 0, 0 \dots 0] \\
    \vdots \\
    T_n &= [0, 0, 0, 0 \dots 1/k]
\end{align*}

and based on $\eta_0$ compute $m = Z^{-1}(\eta_0)$. For each phase, we first make $k - 1$ requests to each of our $n$ tasks. We then concatenate the tasks $T_1, \dots T_{n - m}$ in order, uniformly choose a permutation of tasks $T_{n - m + 1}, \dots T_n$, and append it. We showed in lemma \ref{lemma:adv_can_produce} that this is possible with at most $\eta_0$ error.

Every phase of a BLS phase based algorithm would be exactly $kn$ time steps long, with each of the $m!$ possible phases appearing with equal probability and each state getting saturated when the corresponding task appears in the last $n$ task requests. The familiar harmonic number argument shows that any deterministic algorithm makes at least 
$H_m$ state transitions in expectation per phase. Adding in all task processing costs, each phase incurs cost at least $H_m$ as $k$ goes to infinity, so $m_j \geq H_m(\frac{j}{kn})$.

The optimal offline algorithm must incur a cost of $1$ per phase, so $E_D[c_0(\mathbf{T}^j)] = \frac{j}{kn}$. Thus, by Lemma 7.2 from \cite{BLS} which applies Yao's minimax principle,

$$\overline{w}(S, d) \geq \displaystyle \limsup_{j \rightarrow \infty} \frac{m_j}{E_D[c_0(\mathbf{T}^j)]} \geq \frac{H_m(\frac{j}{kn})}{\frac{j}{kn}} = H_m$$.

\end{proof}

$H_m$ is known to be $\Theta(\log m)$, so this result tells us that no randomized BLS phase algorithm is better than $\Theta(\log \sqrt{\eta_0}) = \Theta(\log \eta_0)$ competitive.

\section{Conclusion}

In this paper, we highlighted some of the challenges associated with augmenting online algorithms with a machine learning type oracle. We constructed the LPS algorithm, which performs well with good predictions and is robust to poorly trained oracles, and proved that it performs better than any other deterministic BLS phase algorithm. We also proved a lower bound on the competitive ratio of any randomized BLS phase algorithm.

\subsection{Future Work}

A few open questions still remain regarding the topics we address in this paper. The first is to investigate whether we can construct more competitive algorithms using Irani and Seiden's framework for solving task system problems. As we described before, their algorithms are much more complex than BLS's, but have slightly lower competitive ratios. Our paper also leaves the question of whether we can improve algorithms for general metrical task systems using machine learned advice. Finally, we leave the study of randomized online algorithms with advice for uniform task systems to future work.

\section*{Acknowledgments}

This work was done under the direction of Michael Mitzenmacher, Professor of Computer Science at the Harvard School of Engineering and Applied Sciences.

\newpage

\end{document}